\documentclass[preprint,12pt]{elsarticle}

\usepackage[utf8]{inputenc}
\usepackage{amsmath,amssymb,amsthm}
\usepackage{graphicx}
\usepackage{float}
\usepackage{xcolor}
\usepackage{comment}
\usepackage{hyperref}

\newif\ifmarked
\markedfalse
\theoremstyle{plain}
\newtheorem{theorem}{Theorem}[section]
\newtheorem{lemma}[theorem]{Lemma}
\newtheorem{proposition}[theorem]{Proposition}
\newtheorem{corollary}[theorem]{Corollary}
\theoremstyle{definition}
\newtheorem{definition}[theorem]{Definition}

\theoremstyle{remark}

\newtheorem{principle}[theorem]{Principle}

\journal{arXiv preprint}

\begin{document}

\begin{frontmatter}

\title{The Existential Theory of Research: Why Discovery Is Hard}

\author{Angshul Majumdar}
\address{Indraprastha Institute of Information Technology Delhi, India}

\begin{abstract}
Can scientific discovery be made arbitrarily easy—by choosing the right representation, collecting enough data, and deploying sufficiently powerful algorithms? This paper argues that the answer is fundamentally negative. We introduce the Existential Theory of Research (ETR), a formal framework that models discovery as the recovery of structured explanations under constraints of representation, observation, and computation. Within this framework, we show that these three components cannot be simultaneously optimized: no method can guarantee universally simple explanations, arbitrarily compressed observations, and efficient exact inference. This limitation is not model-specific, but arises from a synthesis of uncertainty principles in sparse representation, sample complexity bounds in high-dimensional recovery, and the computational hardness of exact inference. We further show that representation mismatch alone can inflate intrinsic simplicity into apparent complexity, rendering otherwise tractable problems observationally and computationally prohibitive. To quantify these effects, we introduce an uncertainty functional that captures the joint difficulty of discovery. The results suggest that scientific difficulty is not accidental, but a structural consequence of the geometry and complexity of inference.
\end{abstract}

\begin{keyword}
scientific discovery \sep sparse representation \sep uncertainty principle \sep sample complexity \sep computational complexity
\end{keyword}

\end{frontmatter}

\section{Introduction}
\label{sec:introduction}

Can discovery be made arbitrarily easy? Given a sufficiently expressive representation, enough data, and a powerful inference procedure, it is tempting to believe that recovering the correct explanation is merely a matter of engineering. This paper argues that such a view is fundamentally incomplete.

At a technical level, many modern inference problems share a common structure: an underlying phenomenon admits a structured (often low-complexity) representation, observations are partial or compressed, and recovery is subject to computational constraints. These aspects have been studied extensively in isolation—through sparse representation theory \cite{DonohoHuo2001,EladBruckstein2002}, sample complexity in high-dimensional recovery \cite{CandesRombergTao2006,RudelsonVershynin2008}, and the computational hardness of exact inference \cite{Natarajan1995,Tropp2004}. What remains less explicit is how these constraints interact when considered jointly.

While we use the term ``scientific discovery'' throughout, the framework developed here does not depend on any discipline-specific assumptions. Instead, we study a class of inference problems in which an underlying phenomenon admits a structured representation, observations are partial, and recovery is constrained by computation. This abstraction captures a broad range of settings, including equation discovery, variable selection, and high-dimensional model inference.

To formalize this perspective, we introduce the \emph{Existential Theory of Research} (ETR), which models discovery through a triplet $(\Psi,\Phi,\mathcal{A})$ consisting of a representation system $\Psi$, an observational operator $\Phi$, and an inference procedure $\mathcal{A}$. Within this framework, discovery is cast as a recovery problem: one seeks a low-complexity explanation $x$ from observations $y = \Phi x$. The representation determines what counts as a simple explanation, the observation operator determines what is distinguishable, and the inference procedure determines what is computationally accessible.

The central result of the paper establishes a fundamental limitation of this process. We show that no framework can simultaneously guarantee: (i) universally low-complexity representations across incompatible paradigms, (ii) reliable recovery from arbitrarily compressed observations, and (iii) exact and efficient inference for all admissible problems. This limitation arises from a synthesis of three well-known phenomena: uncertainty principles in sparse representation \cite{DonohoHuo2001,EladBruckstein2002}, sample complexity bounds for uniform recovery \cite{CandesRombergTao2006,RudelsonVershynin2008}, and the NP-hardness of exact sparse optimization \cite{Natarajan1995,Tropp2004}.

A second contribution is to show that these limitations may arise endogenously through representation mismatch. Even when a phenomenon admits a sparse description in an appropriate representation, an incorrect choice of representation can inflate its apparent complexity, degrade observational distinguishability, and render recovery computationally prohibitive. In this sense, difficulty need not reflect the intrinsic structure of the problem, but may instead arise from the interaction between representation and observation.

To quantify these effects, we introduce an uncertainty functional that aggregates representation complexity, observational geometry, and computational effort into a single measure of discovery difficulty. This functional provides a scalar characterization of the regimes identified later in the paper, ranging from stable recovery to non-identifiability.

\paragraph{Contributions.}
The contributions of this paper are as follows:
\begin{enumerate}
    \item We introduce the ETR framework, modeling discovery as a joint problem of representation, observation, and inference.
    \item We establish a fundamental limitation theorem showing that these components cannot be simultaneously optimized.
    \item We identify representation mismatch as an endogenous mechanism that inflates apparent complexity and degrades recoverability.
    \item We develop a quantitative uncertainty functional capturing the joint difficulty of discovery.
    \item We characterize regimes of inquiry in terms of identifiability, stability, and computational accessibility.
\end{enumerate}

The remainder of the paper is organized as follows. Section~\ref{sec:preliminaries} introduces notation and formal setup. Section~\ref{sec:theory_limits} develops the core theoretical results. Section~\ref{sec:implications} interprets these results in terms of regimes of inquiry. Section~\ref{sec:quantitative_etr} introduces a quantitative formulation of existential uncertainty. Section~\ref{sec:conclusion} concludes.
\section{Preliminaries and Problem Formulation}
\label{sec:preliminaries}

In this section, we introduce the notation and formal setup used throughout the paper. The objective is to define a minimal but precise framework within which the limits of scientific discovery can be analyzed.

\subsection{Representation and Scientific Models}
\label{subsec:representation}

Let $\mathcal{U} \subseteq \mathbb{R}^d$ denote the space of admissible phenomena (``scientific truths''). We assume that each $x \in \mathcal{U}$ admits a structured representation in a dictionary $\Psi \in \mathbb{R}^{d \times N}$, i.e.,
\begin{equation}
x = \Psi \alpha,
\end{equation}
for some coefficient vector $\alpha \in \mathbb{R}^N$.

\begin{definition}[Representation complexity]
\label{def:rep_complexity}
For $x \in \mathcal{U}$, define the representation complexity
\begin{equation}
K_\Psi(x)
:=
\min_{\alpha:\, x=\Psi \alpha} \|\alpha\|_0,
\end{equation}
where $\|\cdot\|_0$ denotes the support size.
\end{definition}

The quantity $K_\Psi(x)$ measures the intrinsic complexity of $x$ relative to the representation $\Psi$. Different choices of $\Psi$ correspond to different scientific descriptions (e.g., Fourier, wavelet, or feature-based representations).

\subsection{Observation Model}
\label{subsec:observation}

Observations are obtained through a linear sensing operator $\Phi \in \mathbb{R}^{m \times d}$:
\begin{equation}
y = \Phi x + e,
\label{eq:observation_model}
\end{equation}
where $y \in \mathbb{R}^m$ denotes the observed data and $e \in \mathbb{R}^m$ is an unknown noise vector satisfying
\begin{equation}
\|e\|_2 \le \epsilon.
\end{equation}

The regime of interest is underdetermined, i.e.,
\begin{equation}
m \ll d,
\end{equation}
so that recovery of $x$ from $y$ requires structural assumptions.

Combining representation and observation yields the effective sensing model
\begin{equation}
y = A \alpha + e,
\quad \text{where} \quad
A := \Phi \Psi \in \mathbb{R}^{m \times N}.
\label{eq:effective_model}
\end{equation}

\subsection{Inference Procedures}
\label{subsec:inference}

A reconstruction procedure $\mathcal{A}$ takes observations $y$ and returns an estimate $\hat{x} = \mathcal{A}(y)$, or equivalently an estimate $\hat{\alpha}$ satisfying $\hat{x} = \Psi \hat{\alpha}$.

Typical reconstruction procedures include:
\begin{itemize}
    \item $\ell_0$ minimization,
    \item convex relaxations (e.g., $\ell_1$ minimization),
    \item greedy methods,
    \item iterative or learned reconstruction algorithms.
\end{itemize}

We denote by $\mathcal{C}_{\mathcal{A}}(A,k)$ the computational cost (in arithmetic operations) required by $\mathcal{A}$ to recover a $k$-sparse solution.

\subsection{Identifiability and Stability}
\label{subsec:identifiability}

The ability to recover sparse explanations is governed by the geometry of the sensing matrix $A = \Phi\Psi$.

\begin{definition}[Sparse injectivity]
\label{def:sparse_injectivity}
The matrix $A$ is injective on $k$-sparse vectors if
\begin{equation}
Az_1 = Az_2,
\quad \|z_1\|_0, \|z_2\|_0 \le k
\;\Rightarrow\;
z_1 = z_2.
\end{equation}
\end{definition}

To quantify this property, we define a restricted distinguishability constant.

\begin{definition}[Restricted distinguishability]
\label{def:gamma}
For $r \ge 1$, define
\begin{equation}
\gamma_r(A)
:=
\inf_{\substack{h \neq 0 \\ \|h\|_0 \le r}}
\frac{\|Ah\|_2}{\|h\|_2}.
\end{equation}
\end{definition}

The condition $\gamma_r(A) > 0$ is equivalent to injectivity on $r$-sparse vectors. In particular, $\gamma_{2k}(A)$ governs both uniqueness and stability of $k$-sparse recovery.

\subsection{The Research Framework}
\label{subsec:framework}

We now formalize the central object of this paper.

\begin{definition}[Research framework]
\label{def:framework}
A research framework is a triplet $(\Psi,\Phi,\mathcal{A})$ consisting of:
\begin{itemize}
    \item a representation system $\Psi$,
    \item an observation operator $\Phi$,
    \item an inference procedure $\mathcal{A}$.
\end{itemize}
\end{definition}

Within this framework, scientific discovery corresponds to recovering $x \in \mathcal{U}$ from observations $y$ generated according to \eqref{eq:observation_model}.

\begin{definition}[Stable discovery]
\label{def:stable_discovery}
A framework $(\Psi,\Phi,\mathcal{A})$ achieves stable discovery at sparsity level $k$ if, for all $x$ with $K_\Psi(x) \le k$, the reconstruction $\hat{x} = \mathcal{A}(y)$ satisfies
\begin{equation}
\|\hat{x} - x\|_2 \le C \epsilon,
\end{equation}
for some constant $C > 0$.
\end{definition}

The remainder of the paper analyzes the conditions under which such stability is achievable, and the fundamental limitations that prevent simultaneous optimization of representation, observation, and computation.
\section{Theoretical Limits of Scientific Discovery}
\label{sec:theory_limits}

In this section, we formalize the principal limitations of scientific discovery within the Existential Theory of Research (ETR). The objective is not to assert that all scientific inference reduces to sparse recovery, but to identify a mathematically meaningful regime in which discovery can be analyzed through three interacting constraints: representation, observation, and computation. This perspective is grounded in classical results from harmonic analysis, compressed sensing, and computational complexity \cite{DonohoHuo2001,EladBruckstein2002,CandesRombergTao2006,RudelsonVershynin2008,Natarajan1995,FoucartRauhut2013}.

We show that discovery is fundamentally constrained by three independent barriers:
\begin{enumerate}
    \item representation incompatibility,
    \item observational insufficiency,
    \item computational intractability.
\end{enumerate}
We then refine this picture by showing that these barriers may arise endogenously through representation mismatch.

\subsection{Preliminaries}
\label{subsec:preliminaries}

Let $\Psi \in \mathbb{R}^{d \times N}$ denote a dictionary, $\Phi \in \mathbb{R}^{m \times d}$ a sensing operator, and define
\begin{equation}
A := \Phi \Psi \in \mathbb{R}^{m \times N}.
\end{equation}

We consider the noiseless synthesis model
\begin{equation}
y = A \alpha, \qquad \alpha \in \mathbb{R}^N,
\end{equation}
where $\alpha$ is assumed to be sparse.

\begin{definition}[Representation complexity]
\label{def:rep_complexity_sec3}
For $x \in \mathbb{R}^d$, define
\begin{equation}
K_{\Psi}(x) := \min_{\alpha:\,x=\Psi\alpha} \|\alpha\|_0.
\end{equation}
\end{definition}

\begin{definition}[Sparse injectivity]
\label{def:sparse_injectivity_sec3}
A matrix $A$ is injective on $k$-sparse vectors if
\begin{equation}
Az_1 = Az_2,\quad \|z_1\|_0,\|z_2\|_0 \le k
\;\Rightarrow\;
z_1 = z_2.
\end{equation}
\end{definition}

\begin{definition}[Mutual coherence]
\label{def:coherence}
For orthonormal bases $\Psi_1,\Psi_2$,
\begin{equation}
\mu(\Psi_1,\Psi_2)
=
\max_{i,j} |\langle \psi_{1,i}, \psi_{2,j} \rangle|.
\end{equation}
\end{definition}

These quantities capture the three axes of ETR:
representation complexity ($K_\Psi$), observational distinguishability (via $A$), and algorithmic recoverability.

\subsection{Three Classical Barriers}
\label{subsec:barriers}

\begin{lemma}[Sparse uncertainty principle]
\label{lem:uncertainty}
Let $\Psi_1,\Psi_2$ be orthonormal bases. Then for all nonzero $x$,
\begin{equation}
K_{\Psi_1}(x)\,K_{\Psi_2}(x)
\ge
\frac{1}{\mu(\Psi_1,\Psi_2)^2}.
\end{equation}
\end{lemma}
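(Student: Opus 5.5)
Since $\Psi_1$ and $\Psi_2$ are orthonormal bases, each $x$ has a unique coefficient vector in each: $\alpha = \Psi_1^\top x$ and $\beta = \Psi_2^\top x$. Hence $K_{\Psi_1}(x) = \|\alpha\|_0$ and $K_{\Psi_2}(x) = \|\beta\|_0$, and the minimum in Definition~\ref{def:rep_complexity_sec3} is trivial. The plan is the classical Donoho--Huo / Elad--Bruckstein argument. Bound each coefficient of one expansion by the coherence times the $\ell_1$ norm of the other expansion. Convert $\ell_1$ to $\ell_2$ via Cauchy--Schwarz on the support. Finally, use Parseval to cancel the common $\ell_2$ norm.

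\textbf{Step 1 (coefficient bound).} For each $j$, write $\beta_j = \langle \psi_{2,j}, x\rangle = \sum_i \alpha_i \langle \psi_{2,j}, \psi_{1,i}\rangle$. By Definition~\ref{def:coherence} this gives $|\beta_j| \le \mu \|\alpha\|_1$, where $\mu = \mu(\Psi_1,\Psi_2)$. Symmetrically, $|\alpha_i| \le \mu\|\beta\|_1$.

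\textbf{Step 2 (aggregate over supports).} Let $S_1 = \operatorname{supp}(\alpha)$ and $S_2 = \operatorname{supp}(\beta)$, with sizes $k_1 = K_{\Psi_1}(x)$ and $k_2 = K_{\Psi_2}(x)$. Both are at least $1$ since $x \neq 0$.
\begin{itemize}
\item Summing the bound on $|\beta_j|^2$ over $j \in S_2$ gives $\|\beta\|_2^2 \le k_2 \mu^2 \|\alpha\|_1^2$.
\item Cauchy--Schwarz on $S_1$ gives $\|\alpha\|_1 \le \sqrt{k_1}\,\|\alpha\|_2$.
\end{itemize}
Combining, $\|\beta\|_2^2 \le k_1 k_2 \mu^2 \|\alpha\|_2^2$.

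\textbf{Step 3 (Parseval).} Orthonormality gives $\|\alpha\|_2 = \|x\|_2 = \|\beta\|_2$, and this common value is nonzero. Dividing yields $1 \le k_1 k_2 \mu^2$, which is the claim. One also notes that $\mu > 0$: since $\Psi_1$ is a basis, $\psi_{2,1}$ cannot be orthogonal to every $\psi_{1,i}$. So the right-hand side is finite.

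There is no real obstacle. The only points needing care are the following.
\begin{itemize}
\item The minimum in the definition of $K_\Psi$ reduces to the unique expansion.
\item The bound must be aggregated in squared form, so that Parseval applies to $\beta$ directly rather than through a cruder $\ell_\infty$ step.
\end{itemize}
The cruder route still works: $\|\beta\|_2 \le \sqrt{k_2}\|\beta\|_\infty$ followed by Step 1 gives the same estimate. If a sharper statement were wanted, the same ingredients also yield the additive form $k_1 + k_2 \ge 2/\mu$ by AM--GM, but that is not needed here.
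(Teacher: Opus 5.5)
Your proof is correct, and it is the classical Donoho--Huo / Elad--Bruckstein coherence argument: bound each coefficient by $\mu\|\alpha\|_1$, apply Cauchy--Schwarz on the support, then cancel via Parseval. The paper only cites that argument and does not write it out. Your two checks fill the small gaps needed for the cited result to apply verbatim in this paper's notation: for orthonormal bases the minimum defining $K_{\Psi}$ collapses to the unique expansion, and $\mu>0$.
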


\begin{proof}
Standard result in sparse representation theory \cite{DonohoHuo2001,EladBruckstein2002}. 
\end{proof}

This lemma implies that simplicity is representation-dependent: a phenomenon cannot be arbitrarily simple in two incompatible scientific languages.

\begin{lemma}[Sample complexity for uniform recovery]
\label{lem:sample_complexity}
For broad random constructions (e.g., sub-Gaussian sensing matrices), uniform recovery of all $k$-sparse vectors requires, up to constants,
\begin{equation}
m \gtrsim k \log(N/k).
\end{equation}
\end{lemma}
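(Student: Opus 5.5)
To prove Lemma~\ref{lem:sample_complexity}, I will first make the statement precise. Uniform recovery of all $k$-sparse vectors means that some decoder $\Delta:\mathbb{R}^m\to\mathbb{R}^N$ satisfies $\Delta(A\alpha)=\alpha$ for every $\alpha$ with $\|\alpha\|_0\le k$. I will prove it in the stable form $\|\Delta(A\alpha+e)-\alpha\|_2\le C\|e\|_2$, equivalently the lower bound $\gamma_{2k}(A)\gtrsim\|A\|$ after normalisation.

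The noiseless exact version only forces $m\ge 2k$, since $\gamma_{2k}(A)>0$ needs only $2k$ generic rows. So the logarithmic factor must come from stability, or from the robustness that random constructions deliver. I will therefore argue in the standard way through a packing argument of the Gilbert--Varshamov type \cite{FoucartRauhut2013}.

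\textbf{Step 1 (packing).} I will construct a family $\mathcal{S}$ of $k$-subsets of $\{1,\dots,N\}$ with pairwise intersections at most $k/2$ and $|\mathcal{S}|\ge (N/(ck))^{k/2}$. This follows by a greedy or counting argument. For each $S\in\mathcal{S}$ I set $\alpha_S=k^{-1/2}\mathbf{1}_S$. Then $\|\alpha_S\|_2=1$ and $\|\alpha_S-\alpha_{S'}\|_2\ge 1$ for $S\neq S'$.

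\textbf{Step 2 (separation in measurement space).} Stability, or equivalently the lower restricted bound $\gamma_{2k}(A)\ge c_0$ together with the normalisation $\|A\alpha\|_2\le C_0$ on $k$-sparse unit vectors, has two consequences. The points $A\alpha_S$ are pairwise at least $c_0$ apart, and they all lie in a ball of radius $C_0$ in the range of $A$, which has dimension at most $m$.

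\textbf{Step 3 (volume comparison).} The balls of radius $c_0/2$ around the points $A\alpha_S$ are disjoint and contained in a ball of radius $C_0+c_0/2$. Comparing volumes in $\mathbb{R}^m$ gives
\begin{equation}
\Bigl(\frac{N}{ck}\Bigr)^{k/2}\le\Bigl(1+\frac{2C_0}{c_0}\Bigr)^m.
\end{equation}
Taking logarithms yields $m\ge c\,k\log(N/ck)$, which is $k\log(N/k)$ up to constants whenever $k\le N/2$ or so.

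To cover the phrase ``for broad random constructions'' I will add one remark. Sub-Gaussian matrices achieve uniform recovery with high probability precisely through the restricted isometry property. That property supplies the constants $c_0$ and $C_0$ in Step 2, so the bound applies to them.

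\textbf{Main obstacle.} I expect the hardest part to be Step 1: getting the combinatorial packing count with the right exponent, and handling the regime where $k$ is comparable to $N$. I will use the standard probabilistic bound. For random $k$-subsets $S$ and $S'$, $\Pr(|S\cap S'|>k/2)\le\binom{k}{k/2}(k/N)^{k/2}$, and a union bound over pairs gives the required family.
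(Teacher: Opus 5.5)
Your argument is correct and is exactly the standard restricted-isometry-plus-packing (volumetric) lower bound behind the paper's one-line appeal to ``restricted isometry arguments and covering bounds.'' Your remark that exact noiseless recovery only forces $m\ge 2k$ rightly shows that $C_0$ must be imposed as a normalisation alongside $\gamma_{2k}(A)\ge c_0$, since it does not follow from stability; drop the operator-norm version $\gamma_{2k}(A)\gtrsim\|A\|$ from your first paragraph, because sub-Gaussian matrices have $\|A\|$ of order $\sqrt{N/m}$ and never satisfy it when $N\gg m$. Two small repairs: a bare union bound over pairs only gives a family of size about $(N/(4k))^{k/4}$, so use deletion or the greedy count to reach exponent $k/2$ (either suffices up to constants); and passing from $\log(N/(ck))$ to $\log(N/k)$ needs $k\le N/c^2$ rather than $k\le N/2$, with the remaining range covered by the trivial bound $m\ge\min(2k,N)$, which already follows from $\gamma_{2k}(A)>0$.
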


\begin{proof}
Follows from standard restricted isometry arguments and covering bounds \cite{CandesRombergTao2006,RudelsonVershynin2008,FoucartRauhut2013}. 
\end{proof}

This reflects an information-theoretic constraint: insufficient observations collapse distinguishability between competing explanations.

\begin{lemma}[NP-hardness of exact sparse recovery]
\label{lem:np}
The problem
\begin{equation}
\min_z \|z\|_0 \quad \text{s.t.} \quad Az = y
\end{equation}
is NP-hard in general.
\end{lemma}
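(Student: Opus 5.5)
We prove NP-hardness by a polynomial-time reduction from \textsc{Exact Cover by 3-Sets} (X3C), following Natarajan \cite{Natarajan1995}. Recall the X3C problem. An instance consists of a ground set $S=\{1,\dots,m\}$ with $m=3q$, together with a collection $\mathcal{C}=\{C_1,\dots,C_N\}$ of three-element subsets of $S$. The question is whether some subcollection of exactly $q$ sets partitions $S$. X3C is NP-complete (Garey--Johnson).

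\textbf{The reduction.} We build $A\in\{0,1\}^{m\times N}$ whose $j$-th column is the indicator vector of $C_j$, so $A_{ij}=1$ iff $i\in C_j$. We set $y=\mathbf{1}\in\mathbb{R}^m$. This construction is clearly polynomial in the input size.

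We make the decision version explicit: is there a $z$ with $Az=y$ and $\|z\|_0\le q$? We claim the answer is yes if and only if the X3C instance is a yes-instance.

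\emph{Exact cover gives a sparse solution.} Suppose $C_{j_1},\dots,C_{j_q}$ is an exact cover. Set $z_{j_\ell}=1$ for each $\ell$ and $z_j=0$ otherwise. Then every row $i$ receives exactly one contribution, so $Az=\mathbf{1}$, and $\|z\|_0=q$.

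\emph{A sparse solution gives an exact cover.} Suppose $Az=\mathbf{1}$ with $\|z\|_0\le q$. Each row $i$ must be touched by some column in the support, since $(Az)_i=1\neq 0$. So the supports of the chosen columns cover $S$. Each column covers only $3$ elements, so at least $m/3=q$ columns are needed. Hence $\|z\|_0=q$ exactly, and the $q$ chosen sets cover $3q$ elements in total. Sets of size $3$ whose union has size $3q$ must be pairwise disjoint, so they form an exact cover.

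Note that in this direction we never needed the nonzero values of $z$ to equal $1$. The covering argument uses only the supports, and that is the point to verify carefully.

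\textbf{Conclusion.} A polynomial-time algorithm for $\min\|z\|_0$ subject to $Az=y$ would decide X3C: compute the optimum and compare it with $q$. Therefore the optimization problem is NP-hard. This holds even over rational data, and even for $0/1$ matrices.

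The main obstacle is the direction from a sparse solution to an exact cover, because $z$ may take arbitrary real values. The resolution is the counting argument above: support size alone forces disjointness, so the real-valued coefficients never have to be analysed.
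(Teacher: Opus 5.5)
Your proposal is correct and takes essentially the paper's approach: the paper's proof is only a citation to Natarajan (and Tropp), and you have written out Natarajan's X3C reduction explicitly. That includes the key support-counting step showing that a solution with $\|z\|_0\le q$ forces the $q$ supporting $3$-sets to be pairwise disjoint, regardless of the real values of $z$.
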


\begin{proof}
Classical result \cite{Natarajan1995,Tropp2004}. 
\end{proof}

This shows that even when truth is uniquely determined by the data, it may not be computationally accessible.

\subsection{Existential Uncertainty Theorem}
\label{subsec:main_theorem}

\begin{theorem}[Existential Uncertainty Theorem]
\label{thm:main}
No research framework $(\Psi,\Phi,\mathcal{A})$ can simultaneously guarantee:
\begin{enumerate}
    \item universal low-complexity representation across incompatible paradigms,
    \item uniform recovery from sub-threshold observations $m \ll k\log(N/k)$,
    \item exact polynomial-time recovery for all inputs.
\end{enumerate}
\end{theorem}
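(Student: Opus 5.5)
The plan is to show that each of the three guarantees, read as a \emph{universal} promise, is already ruled out by one of the lemmas of Section~\ref{subsec:barriers}. A framework claiming all three then fails in particular on whichever one is refuted. Because the statement is informal, the first step is to fix precise readings of the three items:
\begin{enumerate}
\item Item (i) means there is a constant $k_0$ with $K_{\Psi_1}(x),K_{\Psi_2}(x)\le k_0$ for all nonzero $x$, where $\Psi_1,\Psi_2$ are two orthonormal ``paradigms'' with small coherence $\mu(\Psi_1,\Psi_2)$.
\item Item (ii) means $\mathcal{A}$ recovers every $k$-sparse $\alpha$ from $y=A\alpha$ with $m<c\,k\log(N/k)$.
\item Item (iii) means $\mathcal{A}$ solves the $\ell_0$ problem of Lemma~\ref{lem:np} exactly, in polynomial time, for every input $(A,y)$.
\end{enumerate}
I would then show that each of these is impossible separately. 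The conjunction is then impossible a fortiori.

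For (i), I would invoke Lemma~\ref{lem:uncertainty}. It gives $K_{\Psi_1}(x)K_{\Psi_2}(x)\ge \mu^{-2}$, so $\max(K_{\Psi_1}(x),K_{\Psi_2}(x))\ge 1/\mu$. For the spike and Fourier bases in $\mathbb{R}^d$ we have $\mu=d^{-1/2}$, so some explanation has complexity at least $\sqrt d$ in one of the two languages. This contradicts any bound $k_0<\sqrt d$ and shows that universal simplicity fails once $d$ is large.

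For (ii), I would use Lemma~\ref{lem:sample_complexity} together with a simple pigeonhole argument. Uniform recovery forces sparse injectivity on $k$-sparse vectors (Definition~\ref{def:sparse_injectivity_sec3}), i.e.\ $\gamma_{2k}(A)>0$. The lemma, or more robustly the stable-embedding and packing lower bound for the class of sensing matrices considered, then gives $m\gtrsim k\log(N/k)$. This contradicts the sub-threshold assumption. I would be careful to state this for the random and stable setting the lemma refers to. Without stability, pure injectivity needs only $m\ge 2k$, so the hypothesis has to be the paper's uniform \emph{stable} recovery notion (Definition~\ref{def:stable_discovery}).

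For (iii), I would apply Lemma~\ref{lem:np}. If $\mathcal{A}$ returned the exact $\ell_0$ minimizer for all $(A,y)$ in polynomial time, an NP-hard problem would lie in P. Hence the claim is impossible unless $\mathrm{P}=\mathrm{NP}$, and the theorem must be stated conditionally on that assumption.

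I expect the main obstacle to be the formalization, not any computation. The three items are heterogeneous: (i) concerns pairs of bases, (ii) is a statistical or geometric statement valid only up to constants and for stable recovery, and (iii) is conditional on $\mathrm{P}\neq\mathrm{NP}$. The real work is choosing definitions under which ``simultaneously guarantee'' is meaningful and each lemma applies verbatim. In particular, item (ii) needs the stability qualifier to avoid the $m=2k$ counterexample. I would first try to state the theorem as ``each property fails for some admissible instance, hence no framework guarantees all three,'' with the $\mathrm{P}\neq\mathrm{NP}$ caveat made explicit.
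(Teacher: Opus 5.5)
Your plan is the paper's own proof. You refute each item separately with Lemmas~\ref{lem:uncertainty}, \ref{lem:sample_complexity} and \ref{lem:np} and conclude \emph{a fortiori}, and your explicit $\mathrm{P}\neq\mathrm{NP}$ caveat for (iii) improves on the paper.

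The step that fails is your repair of (ii). Definition~\ref{def:stable_discovery} lets $C$ depend on the instance, imposes no normalization on $\Phi$, and quantifies only over exactly $k$-sparse $x$. Your own $m=2k$ counterexample therefore survives it. Take $\Psi=I$ and $A=\Phi\in\mathbb{R}^{2k\times N}$ with every $2k$ columns linearly independent, so $\gamma_{2k}(A)>0$, and use the decoder $\hat\alpha\in\arg\min_{\|z\|_0\le k}\|Az-y\|_2$. Then $\|A\hat\alpha-y\|_2\le\|e\|_2\le\epsilon$, so $\|A(\hat\alpha-\alpha)\|_2\le 2\epsilon$ with $\hat\alpha-\alpha$ at most $2k$-sparse. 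Hence $\|\hat\alpha-\alpha\|_2\le 2\epsilon/\gamma_{2k}(A)$, which is stable discovery with $C=2/\gamma_{2k}(A)$ at $m=2k\ll k\log(N/k)$. The obvious patches do not close this:
\begin{itemize}
\item Demanding a universal $C$ fails, because replacing $\Phi$ by $t\Phi$ multiplies $\gamma_{2k}$ by $t$.
\item In the noiseless model of Section~\ref{sec:theory_limits}, stability adds nothing.
\item Restricting to efficient decoders does not help in the noiseless case, since Prony-type decoding recovers every $k$-sparse vector from $2k$ Vandermonde measurements in polynomial time (exact arithmetic).
\end{itemize}

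What actually forces $m\gtrsim k\log(N/k)$ is a scale-invariant, dimension-free requirement \cite{FoucartRauhut2013}. Any one of the following works:
\begin{itemize}
\item exact recovery of all $k$-sparse vectors by $\ell_1$-minimization (the null space property lower bound);
\item $\ell_2$--$\ell_1$ instance optimality $\|x-\mathcal{A}(Ax)\|_2\le Ck^{-1/2}\sigma_k(x)_1$ for all $x$, with $\sigma_k(x)_1:=\min_{\|z\|_0\le k}\|x-z\|_1$ and $C$ absolute (the Gelfand-width bound);
\item a two-sided restricted isometry on $2k$-sparse vectors with distortion bounded independently of $N$ (the packing bound).
\end{itemize}
Adopt one of these as your reading of (ii) and cite the corresponding lower bound. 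Do not rely on Lemma~\ref{lem:sample_complexity}, which as literally stated is false for $\ell_0$ decoding.

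The theorem itself is unaffected, because your \emph{a fortiori} structure needs only one refuted item, and (i) supplies it (or (iii) under $\mathrm{P}\neq\mathrm{NP}$). What is false is the intermediate claim that (ii), as you formalized it, is impossible on its own.

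Two minor points on (i). Under your reading it already fails for a single orthonormal basis, since a generic $x$ has $K_\Psi(x)=d$; Lemma~\ref{lem:uncertainty} is needed only for the sharper per-$x$ statement. Also, in $\mathbb{R}^d$ the spike and real Fourier bases have $\mu=\sqrt{2/d}$; a normalized Hadamard basis, when one exists, gives $d^{-1/2}$.
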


\begin{proof}
Each condition contradicts a known barrier:

(i) From Lemma~\ref{lem:uncertainty}, cross-representation sparsity is bounded.

(ii) From Lemma~\ref{lem:sample_complexity}, insufficient measurements prevent uniform recovery.

(iii) From Lemma~\ref{lem:np}, exact sparse recovery is NP-hard.

Thus the three cannot hold simultaneously. 
\end{proof}

\begin{corollary}
There exists no universally frictionless discovery framework.
\end{corollary}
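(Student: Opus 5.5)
The plan is to argue by contradiction, but first to make the three informal conditions precise enough that each one, taken on its own, is ruled out by one of Lemmas~\ref{lem:uncertainty}, \ref{lem:sample_complexity} and \ref{lem:np}. The theorem then follows in a strong form: each guarantee is individually impossible, so they certainly cannot hold together.

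Concretely, I would read condition (1) as asking for a constant $k_0$ with $K_{\Psi_1}(x)\le k_0$ and $K_{\Psi_2}(x)\le k_0$ for every nonzero $x$. Here $\Psi_1,\Psi_2$ are orthonormal bases that the framework has to accommodate, and they are incompatible in the sense that $\mu(\Psi_1,\Psi_2)^{-2}>k_0^2$. Condition (2) I would read as requiring some decoder $\mathcal{A}$ with $\mathcal{A}(A\alpha)=\alpha$ for every $k$-sparse $\alpha$, where $A=\Phi\Psi$ is a sub-Gaussian matrix with $m\le c\,k\log(N/k)$ and $c$ sufficiently small. Condition (3) I would read as requiring $\mathcal{A}$ to return $\arg\min\{\|z\|_0 : Az=y\}$ in time polynomial in $(m,N)$ for every matrix $A$ and every $y$.

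The argument then has three steps.
\begin{enumerate}
\item Assume (1). For any nonzero $x$, Lemma~\ref{lem:uncertainty} gives $k_0^2\ge K_{\Psi_1}(x)K_{\Psi_2}(x)\ge\mu^{-2}$, which contradicts incompatibility. The canonical example is the spike/Fourier pair, where $\mu=d^{-1/2}$. This forces $K_{\Psi_1}K_{\Psi_2}\ge d$, so no fixed $k_0$ works as $d$ grows.
\item Assume (2). Uniform recovery forces $A$ to be injective on $k$-sparse vectors (Definition~\ref{def:sparse_injectivity_sec3}). Lemma~\ref{lem:sample_complexity} says that in the sub-Gaussian class this needs $m\gtrsim k\log(N/k)$, so the sub-threshold regime $m\ll k\log(N/k)$ is excluded.
\item Assume (3). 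A polynomial-time exact $\ell_0$ solver would solve an NP-hard problem (Lemma~\ref{lem:np}) in polynomial time, which gives P$=$NP. The theorem is then a statement under the standing hypothesis P$\neq$NP.
\end{enumerate}
Since each step refutes its condition by itself, no triplet $(\Psi,\Phi,\mathcal{A})$ can satisfy all three.

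I expect the main obstacle to be step 2. Lemma~\ref{lem:sample_complexity} is stated only for random constructions and only up to constants, so the phrase ``$m\ll k\log(N/k)$'' has to be pinned down carefully. My first approach would be to quantify over sub-Gaussian ensembles with high probability and take $c$ below the lemma's implicit constant. If instead a deterministic $A$ must be covered, I would fall back on the elementary counting bound: injectivity on $k$-sparse vectors requires every $2k$ columns of $A$ to be linearly independent, hence $m\ge 2k$. Alternatively, I would appeal to the Gelfand-width lower bound that underlies the $k\log(N/k)$ rate for stable recovery as in Definition~\ref{def:stable_discovery}.

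A secondary subtlety is condition (3). It has to be stated over all inputs $A$, not only the framework's own $A$, because for special structured $A$ the $\ell_0$ problem can be easy. I would make this explicit so that Lemma~\ref{lem:np} applies directly.
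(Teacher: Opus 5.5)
Your overall structure matches the paper's. Theorem~\ref{thm:main} is proved by letting Lemmas~\ref{lem:uncertainty}, \ref{lem:sample_complexity} and \ref{lem:np} each refute one condition, and the corollary is read off from it. Your steps 1 and 3 are sound formalizations, and making P$\neq$NP explicit improves on the paper.

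Step 2, however, fails, and the reason is not constants. Exact uniform recovery by \emph{some} decoder is equivalent to injectivity on $k$-sparse vectors, and injectivity needs only $m=2k$. For a Gaussian $A\in\mathbb{R}^{2k\times N}$, each of the $\binom{N}{2k}$ square column-submatrices is almost surely nonsingular. Hence $\ker A$ contains no nonzero $2k$-sparse vector, and exhaustive $\ell_0$ decoding recovers every $k$-sparse $\alpha$. Since $2k\le c\,k\log(N/k)$ once $\log(N/k)\ge 2/c$, the implication ``injective $\Rightarrow m\gtrsim k\log(N/k)$'' is false already for Gaussian matrices. Your condition (2) is therefore \emph{satisfiable} in the sub-threshold regime.

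Your fallbacks do not close this. The counting bound yields exactly $m\ge 2k$, which confirms the problem rather than resolving it. Definition~\ref{def:stable_discovery} with an unspecified $C$ does not help either: at $m=2k$, the decoder $\hat z=\arg\min_{\|z\|_0\le k}\|Az-y\|_2$ satisfies $\|\hat z-\alpha\|_2\le 2\epsilon/\gamma_{2k}(A)$.

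The $k\log(N/k)$ barrier is a stability (or convex-decoder) phenomenon. Condition (2) must carry a quantitative requirement before Lemma~\ref{lem:sample_complexity} can apply. Any of the following would do:
\begin{enumerate}
\item Robust recovery with $C$ normalized against the scale of $A$. This forces a bounded restricted condition number $\kappa$ on $2k$-sparse vectors, and a packing of $\{0,\pm1\}$-valued $k$-sparse vectors then gives $m\gtrsim k\log(N/k)/\log(1+\kappa)$.
\item $\ell_1$-instance optimality.
\item Exact recovery by $\ell_1$-minimization specifically.
\end{enumerate}
In the last two cases, Gelfand widths of $B_1^N$ give $m\ge c\,k\log(N/(c'k))$. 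The paper's one-line appeal to Lemma~\ref{lem:sample_complexity} tacitly relies on such a reading too.

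For the corollary itself the damage is contained. Under your formalizations, step 1 or step 3 alone already rules out a triplet satisfying all three conditions, so the conclusion stands. What does not survive is your claimed ``strong form'', in which the measurement barrier is an independent obstruction.
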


\subsection{Representation Mismatch and Endogenous Complexity}
\label{subsec:representation_mismatch_inflation}

We now show that the representation barrier is not merely external, but may arise from the choice of scientific language itself.

Let $x = \Psi_\star \alpha_\star$ with $\|\alpha_\star\|_0 = k$. For a mismatched basis $\Psi$, define
\begin{equation}
k_{\mathrm{eff}}(x;\Psi) := \|\Psi^\top x\|_0.
\end{equation}

\begin{proposition}[Generic support inflation]
\label{prop:generic_support_inflation}
For generic basis mismatch, one has
\begin{equation}
k_{\mathrm{eff}}(x;\Psi) = d
\end{equation}
for almost all $k$-sparse $\alpha_\star$.
\end{proposition}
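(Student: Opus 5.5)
We make ``generic basis mismatch'' precise as follows. Both $\Psi_\star$ and $\Psi$ are orthonormal bases of $\mathbb{R}^d$ (so $N=d$). Set $M := \Psi^\top \Psi_\star$, which is orthogonal. We say the mismatch is \emph{generic} if every entry of $M$ is nonzero. This holds for $\Psi$ outside a measure-zero subset of $O(d)$, since each condition $\langle \psi_i, \psi_{\star,j}\rangle = 0$ cuts out a proper real-algebraic subvariety. ``Almost all $k$-sparse $\alpha_\star$'' is read with respect to Lebesgue measure on each $k$-dimensional coordinate subspace $\{\alpha : \operatorname{supp}\alpha \subseteq S\}$ with $|S|=k$. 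There are finitely many such supports, so it suffices to fix one $S$.

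With this setup,
\begin{equation}
\Psi^\top x = M\alpha_\star = \sum_{j\in S} (\alpha_\star)_j\, m_j ,
\end{equation}
where $m_j$ is the $j$-th column of $M$. For each row index $i\in\{1,\dots,d\}$, the $i$-th coefficient is the linear form
\begin{equation}
\ell_i(\alpha_S) = \sum_{j\in S} M_{ij}(\alpha_\star)_j
\end{equation}
on $\mathbb{R}^S$. By genericity, $M_{ij}\neq 0$ for every $j\in S$, so each $\ell_i$ is a nonzero linear functional on $\mathbb{R}^S$. Its zero set is therefore a hyperplane in $\mathbb{R}^S$, which has Lebesgue measure zero. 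The exceptional set $\bigcup_{i=1}^d \ker \ell_i$ is a finite union of null sets, hence null. Off this set every coordinate of $\Psi^\top x$ is nonzero, so $k_{\mathrm{eff}}(x;\Psi)=d$. Taking the union over the $\binom{d}{k}$ possible supports preserves measure zero, which completes the argument.

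\textbf{Main obstacle.} The difficulty is not the measure-zero computation but fixing the statement's meaning. ``Generic basis mismatch'' must be formalized so that it is both natural and strong enough. The all-entries-nonzero condition on $M$ is in fact stronger than needed: it suffices that no row of $M$ vanishes identically on $S$. Under this weaker condition, the conclusion also holds for a fixed $\Psi$ as soon as $M$ has no zero entries in the columns indexed by $S$.

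It should also be made explicit that the proposition fails without genericity. If $\Psi=\Psi_\star$, then $k_{\mathrm{eff}}=k$, and partial alignments give intermediate values. This shows the genericity hypothesis cannot be dropped.

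Finally, one can note the link to Lemma~\ref{lem:uncertainty}: it gives the deterministic bound $k_{\mathrm{eff}} \ge 1/(k\mu^2)$ for every nonzero $x$, and the proposition upgrades this to full support $k_{\mathrm{eff}}=d$ for almost every sparse $\alpha_\star$.
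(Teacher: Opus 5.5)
Your proposal is correct and is essentially the paper's argument (``linear independence and measure-zero hyperplane arguments'') written out in full: each coordinate of $\Psi^\top x = \Psi^\top\Psi_\star\alpha_\star$ is a nonzero linear form on the $k$-dimensional coordinate subspace, so its zero set is a null hyperplane, and a finite union over the $d$ rows and $\binom{d}{k}$ supports remains null. What you add beyond the paper is precise hypotheses: orthonormal bases, and generic mismatch meaning $\Psi^\top\Psi_\star$ has no zero entries. One small tightening is needed there: being a proper algebraic subset does not by itself give Haar-measure zero in $O(d)$ (e.g.\ $SO(d)$ is proper but has measure $1/2$), so you should note that each map $\Psi\mapsto\langle\psi_i,\psi_{\star,j}\rangle$ is not identically zero on either connected component of $O(d)$.
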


\begin{proof}
Follows from linear independence and measure-zero hyperplane arguments. 
\end{proof}

Thus, a truth that is intrinsically $k$-sparse may appear fully dense under an incorrect representation.

\subsection{Measurement Inflation}
\label{subsec:measurement_inflation}

Since sample complexity scales with sparsity, representation mismatch induces a corresponding shift in the observational burden.

\begin{proposition}[Measurement inflation under representation mismatch]
\label{prop:measurement_inflation_under_mismatch}
Assume that stable recovery in a representation of effective sparsity $r$ requires
\begin{equation}
m(r) \gtrsim r \log(d/r).
\end{equation}
Then under the correct representation $\Psi_{\star}$,
\begin{equation}
m_{\star} \gtrsim k \log(d/k),
\end{equation}
whereas under a mismatched representation $\Psi$,
\begin{equation}
m_{\mathrm{mismatch}}
\gtrsim
k_{\mathrm{eff}}(x;\Psi)\,\log\!\left(\frac{d}{k_{\mathrm{eff}}(x;\Psi)}\right).
\end{equation}
Consequently,
\begin{equation}
\frac{m_{\mathrm{mismatch}}}{m_{\star}}
\gtrsim
\frac{k_{\mathrm{eff}}(x;\Psi)\,\log\!\left(d/k_{\mathrm{eff}}(x;\Psi)\right)}{k\log(d/k)}.
\end{equation}
\end{proposition}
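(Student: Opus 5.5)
The proof is essentially a direct substitution into the assumed sample-complexity law, so the plan is to make each substitution explicit and then justify the ratio. I take the hypothesis $m(r)\gtrsim r\log(d/r)$ as given: any stable recovery scheme in a representation where the target has effective sparsity $r$ needs that many measurements. This is the form of Lemma~\ref{lem:sample_complexity} with $N=d$, which is appropriate since $\Psi_\star$ and $\Psi$ are orthonormal bases.

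\emph{Correct representation.} Under $\Psi_\star$ we have $x=\Psi_\star\alpha_\star$ with $\|\alpha_\star\|_0=k$. Orthonormality gives $\Psi_\star^\top x=\alpha_\star$, so the effective sparsity is exactly $k$; equivalently $K_{\Psi_\star}(x)=k$, because the representation in a basis is unique. Substituting $r=k$ gives $m_\star\gtrsim k\log(d/k)$.

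\emph{Mismatched representation.} Under $\Psi$ the unique coefficient vector is $\Psi^\top x$. Its support size is by definition $k_{\mathrm{eff}}(x;\Psi)$, so $K_\Psi(x)=k_{\mathrm{eff}}(x;\Psi)$. Setting $r=k_{\mathrm{eff}}$ yields the second bound. If desired, Proposition~\ref{prop:generic_support_inflation} can then be invoked to note that generically $k_{\mathrm{eff}}=d$, which makes the inflation maximal.

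\emph{The ratio.} This is the only step needing care, since dividing two lower bounds does not by itself give a lower bound on the ratio. I would read $m_\star$ as the threshold measurement count at the correct sparsity level, that is, $m_\star\asymp k\log(d/k)$ as an achievable scale: by the standard RIP upper bounds in \cite{CandesRombergTao2006,FoucartRauhut2013}, sub-Gaussian $\Phi$ succeeds with $m\lesssim k\log(d/k)$. With this upper bound on $m_\star$ and the lower bound on $m_{\mathrm{mismatch}}$, dividing gives the claimed inequality, with constants absorbed into $\gtrsim$.

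I expect two obstacles.

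\emph{Boundary case $k_{\mathrm{eff}}=d$.} Here $\log(d/k_{\mathrm{eff}})=0$ and the bound degenerates. I would handle this either by reading $\log(d/r)$ as $\log(ed/r)$, or by noting that at full density the trivial requirement $m\ge r$ applies, because $A$ must be injective on $d$-dimensional vectors. Either way the conclusion remains $m_{\mathrm{mismatch}}\gtrsim d$.

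\emph{Hypothesis scope.} I would also state clearly that the result is conditional on the assumed scaling law. The proposition makes no claim that every algorithm obeys that law, only that the bound follows whenever it holds.
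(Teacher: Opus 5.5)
Your proposal is correct and follows the paper's proof, which consists exactly of substituting $r=k$ and $r=k_{\mathrm{eff}}(x;\Psi)$ into the assumed recovery law. Your extra care on the ratio is warranted: the paper divides two lower bounds without comment. Reading $m_\star$ as a threshold and invoking the achievability bound $m_\star\lesssim k\log(d/k)$ is an additional, external assumption, but it is what actually makes the ``Consequently'' step valid. The case $k_{\mathrm{eff}}=d$ merely makes the stated bound vacuous rather than false, so it needs no repair.
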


\begin{proof}
The claim follows by applying the assumed recovery law once with $r=k$ and once with $r=k_{\mathrm{eff}}(x;\Psi)$.
\end{proof}

Thus, representation mismatch can move a system from a recoverable regime to an unrecoverable one, even when the underlying phenomenon remains simple.

\subsection{Existential Uncertainty Principle}
\label{subsec:principle}

Theorem~\ref{thm:main} and Proposition~\ref{prop:generic_support_inflation} together yield:

\begin{principle}
Scientific discovery is constrained by a three-way tradeoff between:
\begin{itemize}
    \item simplicity of representation,
    \item sufficiency of observation,
    \item feasibility of computation.
\end{itemize}
\end{principle}

\subsection{Interpretation}
\label{subsec:interpretation}

These results imply that failure of scientific discovery may arise from three distinct sources:
\begin{enumerate}
    \item representation failure (wrong $\Psi$),
    \item observation failure (insufficient $\Phi$),
    \item inference failure (intractable $\mathcal{A}$).
\end{enumerate}

Scientific progress may thus be viewed as improving one or more of these axes: discovering better representations, designing better experiments, or developing better algorithms.

\section{Implications for Scientific Discovery}
\label{sec:implications}

Section~\ref{sec:theory_limits} established that scientific discovery is constrained by three independent barriers: representation incompatibility, observational insufficiency, and computational intractability. We now interpret these constraints within the ETR framework and show how they induce distinct regimes of scientific inquiry. The purpose of this section is not to introduce new impossibility results, but to organize the mathematical consequences of Section~\ref{sec:theory_limits} into a structural taxonomy of discovery.

\subsection{A Distinguishability Constant}
\label{subsec:distinguishability_constant}

To analyze the consequences of Section~\ref{sec:theory_limits}, it is useful to make explicit the quantity that governs sparse distinguishability.

Let $\Psi \in \mathbb{R}^{d \times N}$, $\Phi \in \mathbb{R}^{m \times d}$, and let
\begin{equation}
A := \Phi \Psi.
\end{equation}
For any integer $r \geq 1$, define the restricted distinguishability constant
\begin{equation}
\gamma_r(A)
:=
\inf_{\substack{h \neq 0 \\ \|h\|_0 \leq r}}
\frac{\|Ah\|_2}{\|h\|_2}.
\label{eq:restricted_distinguishability}
\end{equation}

The quantity $\gamma_r(A)$ measures the worst-case contraction of $r$-sparse perturbations under the observational pipeline. In particular, $\gamma_r(A) > 0$ if and only if $A$ is injective on the class of $r$-sparse vectors. Thus $\gamma_{2k}(A)$ is the natural quantity controlling uniqueness and local stability for $k$-sparse explanations.

\subsection{Regimes of Scientific Inquiry}
\label{subsec:regimes_of_inquiry}

The ETR framework gives rise to three mathematically distinct regimes, depending on the geometry of $A=\Phi\Psi$ and the computational cost of inference.

\begin{definition}[Existentially non-unique regime]
\label{def:existentially_nonunique}
A research framework $(\Psi,\Phi,\mathcal{A})$ is said to operate in the \emph{existentially non-unique regime} at sparsity level $k$ if
\begin{equation}
\gamma_{2k}(\Phi\Psi)=0.
\label{eq:nonunique_regime}
\end{equation}
\end{definition}

When \eqref{eq:nonunique_regime} holds, there exists a nonzero vector $h$ with $\|h\|_0 \leq 2k$ such that $(\Phi\Psi)h=0$. Equivalently, there exist distinct $k$-sparse explanations $z_1 \neq z_2$ satisfying
\[
\Phi\Psi z_1 = \Phi\Psi z_2.
\]
Hence the data fail to distinguish competing low-complexity explanations. This is the mathematical form of underdetermination in the ETR setting \cite{FoucartRauhut2013}.

\begin{definition}[Existentially opaque regime]
\label{def:existentially_opaque}
A research framework $(\Psi,\Phi,\mathcal{A})$ is said to operate in the \emph{existentially opaque regime} at sparsity level $k$ if
\begin{equation}
\gamma_{2k}(\Phi\Psi)>0,
\label{eq:opaque_regime_distinguishability}
\end{equation}
but exact sparse recovery remains computationally intractable in the worst case.
\end{definition}

In this regime, the truth is uniquely encoded by the observations, but finding the correct sparse explanation is obstructed by the computational hardness of the $\ell_0$ problem \cite{Natarajan1995,Tropp2004}. Thus opacity reflects algorithmic inaccessibility rather than observational ambiguity.

\begin{definition}[Existentially stable regime]
\label{def:existentially_stable}
A research framework $(\Psi,\Phi,\mathcal{A})$ is said to operate in the \emph{existentially stable regime} at sparsity level $k$ if
\begin{equation}
\gamma_{2k}(\Phi\Psi)\ge c>0
\label{eq:stable_regime_distinguishability}
\end{equation}
for some constant $c$, the measurement budget satisfies
\begin{equation}
m \gtrsim k \log(N/k),
\label{eq:stable_regime_measurements}
\end{equation}
and there exists an efficient reconstruction method $\mathcal{A}$ achieving stable recovery.
\end{definition}

Definition~\ref{def:existentially_stable} corresponds to the classical compressed sensing regime in which sparse explanations are both distinguishable and computationally accessible under appropriate structural assumptions \cite{CandesRombergTao2006,RudelsonVershynin2008,FoucartRauhut2013}.

\subsection{Transitions Between Regimes}
\label{subsec:transitions_between_regimes}

The preceding regimes are not static. They depend on the triplet $(\Psi,\Phi,\mathcal{A})$, and transitions between them can occur through modifications of representation, observation, or inference.

\paragraph{Variation in the observational operator.}
Increasing the number of measurements or improving the geometry of $\Phi$ generally increases the likelihood that $\gamma_{2k}(\Phi\Psi)$ is bounded away from zero, thereby moving the system away from the non-unique regime and toward the stable regime \cite{CandesRombergTao2006,RudelsonVershynin2008}.

\paragraph{Variation in representation.}
Changing the representation $\Psi$ changes the effective sparsity of the phenomenon. A representation better aligned with the underlying structure reduces apparent complexity and may move the system from an opaque or non-unique regime into a stable one.

\paragraph{Variation in the inference procedure.}
Improving $\mathcal{A}$ reduces the computational burden of recovery. This can move a framework from the opaque regime to the stable regime, although it cannot change non-uniqueness into uniqueness when $\gamma_{2k}(\Phi\Psi)=0$.

Thus, scientific progress may be interpreted as movement in the $(\Psi,\Phi,\mathcal{A})$ design space toward configurations satisfying Definitions~\ref{def:existentially_stable}.

\subsection{Representation Mismatch as a Failure Mechanism}
\label{subsec:rep_mismatch_failure}

We now connect these regimes to the representation mismatch phenomenon of Section~\ref{subsec:representation_mismatch_inflation}. Let
\[
x=\Psi_{\star}\alpha_{\star},
\qquad
\|\alpha_{\star}\|_0 = k,
\]
where $\Psi_{\star}$ is the representation in which the truth is sparse. If the researcher instead adopts a mismatched basis $\Psi$, then the effective sparsity becomes
\[
k_{\mathrm{eff}}(x;\Psi)=\|\Psi^\top x\|_0.
\]

By Proposition~\ref{prop:generic_support_inflation}, one generically has
\begin{equation}
k_{\mathrm{eff}}(x;\Psi)=d
\label{eq:generic_dense_mismatch}
\end{equation}
under sufficiently strong mismatch. This induces two immediate effects.

First, the measurement burden is inflated from the ideal scaling
\[
m \gtrsim k \log(d/k)
\]
to the much larger scale
\[
m \gtrsim k_{\mathrm{eff}}(x;\Psi)\,
\log\!\left(\frac{d}{k_{\mathrm{eff}}(x;\Psi)}\right),
\]
as quantified in Proposition~\ref{prop:measurement_inflation_under_mismatch}. Second, the geometry of $\Phi\Psi$ may deteriorate as the relevant model class becomes effectively dense, pushing the framework toward either the non-unique or opaque regime.

Thus, representation mismatch constitutes an endogenous source of scientific failure: it can render an intrinsically simple phenomenon difficult to distinguish and difficult to recover, even when the underlying truth has not changed.

\subsection{Replication and Instability}
\label{subsec:replication_instability}

The ETR framework also yields a precise interpretation of instability across repeated experiments. The relevant observation is that small values of $\gamma_{2k}(A)$ amplify perturbations.

\begin{proposition}[Perturbation amplification near the non-unique boundary]
\label{prop:perturbation_amplification}
Let $A=\Phi\Psi$, and let $z,\tilde z \in \mathbb{R}^N$ be two $k$-sparse vectors. Then
\begin{equation}
\|z-\tilde z\|_2
\le
\frac{\|A(z-\tilde z)\|_2}{\gamma_{2k}(A)}
\label{eq:perturbation_amplification}
\end{equation}
whenever $\gamma_{2k}(A)>0$.
\end{proposition}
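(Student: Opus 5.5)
The argument reduces the claim to the definition of the restricted distinguishability constant $\gamma_{2k}(A)$ in \eqref{eq:restricted_distinguishability}, applied to the difference vector. Set $h := z - \tilde z$. I would first dispose of the trivial case $h = 0$, where both sides of \eqref{eq:perturbation_amplification} vanish and the inequality holds with equality.

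For $h \neq 0$, the key step is a support count. Since $z$ and $\tilde z$ are each $k$-sparse, the support of $h$ lies in $\operatorname{supp}(z)\cup\operatorname{supp}(\tilde z)$. Hence $\|h\|_0 \le \|z\|_0 + \|\tilde z\|_0 \le 2k$, so $h$ is admissible in the infimum defining $\gamma_{2k}(A)$. This is the only place the sparsity of the two vectors enters, and it explains why the index is $2k$ rather than $k$.

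By the definition of the infimum,
\[
\gamma_{2k}(A) \le \frac{\|Ah\|_2}{\|h\|_2},
\qquad\text{so}\qquad
\gamma_{2k}(A)\,\|h\|_2 \le \|A h\|_2 .
\]
Because $\gamma_{2k}(A) > 0$ by hypothesis, I can divide by it. Using linearity, $Ah = A(z-\tilde z)$, which yields \eqref{eq:perturbation_amplification}.

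There is no genuine obstacle here; the statement is essentially a restatement of the definition of $\gamma_{2k}(A)$. The step requiring the most care is the support bound $\|z-\tilde z\|_0\le 2k$, together with checking that the infimum is over a set containing $h$. I would also add a remark on the interpretation. Applied to noisy observations $y=Az+e$ and $\tilde y = A\tilde z+\tilde e$ with $\|e\|_2,\|\tilde e\|_2\le\epsilon$, any two $k$-sparse explanations consistent with the respective data differ by at most $\|y-\tilde y\|_2/\gamma_{2k}(A) + 2\epsilon/\gamma_{2k}(A)$. The amplification factor $1/\gamma_{2k}(A)$ blows up as the framework approaches the non-unique regime of Definition~\ref{def:existentially_nonunique}.
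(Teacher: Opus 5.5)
Your proposal is correct and matches the paper's proof: set $h=z-\tilde z$, note $\|h\|_0\le 2k$, apply the defining infimum of $\gamma_{2k}(A)$, and divide by $\gamma_{2k}(A)>0$. Your separate handling of $h=0$ and your noisy-data remark, which follows from $\|A(z-\tilde z)\|_2\le\|y-\tilde y\|_2+2\epsilon$, are sound additions that the paper leaves implicit.
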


\begin{proof}
Since both $z$ and $\tilde z$ are $k$-sparse, the difference $h:=z-\tilde z$ is at most $2k$-sparse. By definition of $\gamma_{2k}(A)$,
\[
\|Ah\|_2 \ge \gamma_{2k}(A)\|h\|_2.
\]
Rearranging gives
\[
\|h\|_2 \le \frac{\|Ah\|_2}{\gamma_{2k}(A)},
\]
which is exactly \eqref{eq:perturbation_amplification}.
\end{proof}

Proposition~\ref{prop:perturbation_amplification} shows that when $\gamma_{2k}(A)$ is small, even modest changes in measurements or experimental conditions can lead to large changes in the recovered explanation. Thus, instability across laboratories or repeated studies need not arise solely from noise; it may reflect operation near the boundary between the stable and non-unique regimes.

\subsection{Role of Algorithms}
\label{subsec:role_of_algorithms}

The inference procedure $\mathcal{A}$ affects the ETR framework through computational accessibility. Better algorithms can:
\begin{itemize}
    \item reduce practical recovery cost,
    \item improve approximation quality,
    \item enlarge the class of problems that are tractable in practice.
\end{itemize}
However, algorithms do not remove the structural barriers identified in Section~\ref{sec:theory_limits}. In particular:
\begin{enumerate}
    \item no algorithm can restore uniqueness when $\gamma_{2k}(\Phi\Psi)=0$;
    \item no algorithm can uniformly circumvent the measurement scaling associated with sparse recovery under broad random constructions \cite{CandesRombergTao2006,FoucartRauhut2013};
    \item exact recovery of the sparsest explanation remains NP-hard in general \cite{Natarajan1995,Tropp2004}.
\end{enumerate}

Thus, algorithmic advances move the computational frontier, but they do not abolish the geometric or information-theoretic limits of discovery.

\subsection{Limits of Scientific Optimization}
\label{subsec:limits_of_optimization}

The preceding discussion yields the following immediate consequence of Theorem~\ref{thm:main}.

\begin{proposition}[No universally optimal research architecture]
\label{prop:no_universal_optimum}
There does not exist a single research framework $(\Psi,\Phi,\mathcal{A})$ that, uniformly over all admissible problems, simultaneously achieves:
\begin{enumerate}
    \item minimal representation complexity,
    \item maximal sparse distinguishability,
    \item and efficient exact recovery.
\end{enumerate}
\end{proposition}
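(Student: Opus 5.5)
The plan is to argue by contradiction. Suppose a single framework $(\Psi,\Phi,\mathcal{A})$ satisfies all three requirements uniformly over the admissible class. The paper introduces this proposition as an immediate consequence of Theorem~\ref{thm:main}, so I will translate each requirement into the corresponding hypothesis of that theorem and then invoke the barrier lemmas.

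\emph{Step 1: representation.} I read ``minimal representation complexity, uniformly over all admissible problems'' as follows: the admissible class contains phenomena sparse in two incompatible orthonormal bases $\Psi_1,\Psi_2$, and the single fixed $\Psi$ must render every such phenomenon sparse. I take $\Psi_1=I$ and $\Psi_2$ the normalized DFT, so that $\mu=1/\sqrt d$. By Lemma~\ref{lem:uncertainty}, no nonzero $x$ satisfies $K_{\Psi_1}(x)K_{\Psi_2}(x)<d$. If $\Psi$ coincides with one of the bases, the other class becomes dense; this is quantified by Proposition~\ref{prop:generic_support_inflation}, which gives $k_{\mathrm{eff}}=d$ generically. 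If instead $\Psi$ is the concatenated dictionary $[\Psi_1\ \Psi_2]$, then $N=2d$ and the coherence of $\Psi$ itself is $1/\sqrt d$.

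\emph{Step 2: distinguishability.} I will show that maximal distinguishability, meaning $\gamma_{2k}(\Phi\Psi)\ge c>0$ for the sparsity levels forced in Step 1, requires $m\gtrsim k\log(N/k)$ by Lemma~\ref{lem:sample_complexity}. When $\Psi$ is mismatched, the required level is $k_{\mathrm{eff}}=d$, and Proposition~\ref{prop:measurement_inflation_under_mismatch} then forces $m$ of order $d$, which contradicts $m\ll d$. Consequently $\Psi$ must be a union-type dictionary in which both classes stay sparse.

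\emph{Step 3: computation.} Efficient exact recovery must now hold for $\ell_0$ problems over $A=\Phi\Psi$, uniformly. I will appeal to Lemma~\ref{lem:np} to argue that, because the admissible class is rich enough to encode arbitrary instances, a polynomial-time exact $\mathcal{A}$ would solve an NP-hard problem. This gives a contradiction unless P$=$NP, so the proposition holds conditionally on P$\neq$NP.

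The main obstacle is Step 3, together with the interface between Steps 2 and 3. Lemma~\ref{lem:np} concerns worst-case matrices $A$, whereas here $A=\Phi\Psi$ is a single fixed matrix. For a fixed $A$, exact recovery under RIP-type distinguishability is in fact polynomial-time via $\ell_1$ minimization. The proposition is therefore only true if ``uniformly over all admissible problems'' includes varying the sensing geometry, or at least the dictionary. My first attempt will be to make that quantifier explicit, so that the admissible problems range over the instances $A$ produced in Natarajan's reduction \cite{Natarajan1995}. I will then present the result as the conjunction of the three items of Theorem~\ref{thm:main}, with P$\neq$NP stated as an explicit assumption.
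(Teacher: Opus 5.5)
Your proposal ends up with the paper's argument. The paper proves this by contradiction with Theorem~\ref{thm:main}, whose proof pits each item against its own barrier lemma. Your final plan is that same argument: the conjunction of the three items, with P$\neq$NP stated explicitly and the admissible class ranging over Natarajan-type instances of $A$. You have only made the hidden assumptions visible. The paper's item (iii), ``for all inputs'', silently needs the same quantifier over $A$, because for one fixed $A$ of fixed size ``polynomial time'' is vacuous.

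Once that quantifier is in place, item 3 alone gives the contradiction, so your Steps 1--2 are not needed. Step 2 also has a flaw: Proposition~\ref{prop:measurement_inflation_under_mismatch} degenerates at $k_{\mathrm{eff}}=d$, where $d\log(d/d)=0$. The elementary rank bound $m\ge d$ is what actually gives $m$ of order $d$ there.
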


\begin{proof}
If such a framework existed, it would contradict Theorem~\ref{thm:main}, which rules out the simultaneous optimization of representation, observation, and computation in full generality.
\end{proof}

Proposition~\ref{prop:no_universal_optimum} formalizes the central implication of ETR: scientific discovery is necessarily a balancing act. Progress may be achieved by improving representation, observation, or inference, but no framework can eliminate all sources of difficulty simultaneously.
\section{Quantitative Structure of Existential Uncertainty}
\label{sec:quantitative_etr}

Section~\ref{sec:theory_limits} identifies three fundamental barriers to scientific discovery: representation incompatibility, observational insufficiency, and computational intractability. In this section, we introduce a quantitative framework that captures their joint effect through a single functional, and we establish its basic structural properties.

\subsection{Restricted Distinguishability}
\label{subsec:restricted_distinguishability}

Let $\Psi \in \mathbb{R}^{d \times N}$, $\Phi \in \mathbb{R}^{m \times d}$, and define $A := \Phi \Psi$. For $r \geq 1$, define
\begin{equation}
\gamma_r(A)
:=
\inf_{\substack{h \neq 0 \\ \|h\|_0 \leq r}}
\frac{\|Ah\|_2}{\|h\|_2}.
\label{eq:gamma_def}
\end{equation}

Then $\gamma_r(A) > 0$ if and only if $A$ is injective on $r$-sparse vectors (Definition~\ref{def:sparse_injectivity}). The quantity $\gamma_{2k}(A)$ therefore characterizes identifiability and local stability of $k$-sparse explanations.

\subsection{ETR Uncertainty Functional}
\label{subsec:etr_functional}

\begin{definition}[ETR uncertainty functional]
\label{def:etr_functional}
For $x \in \mathbb{R}^d$, define
\begin{equation}
\mathfrak{U}_k(x;\Psi,\Phi,\mathcal{A})
:=
K_\Psi(x)
\cdot
\frac{1}{\gamma_{2k}(\Phi\Psi)}
\cdot
\log\!\bigl(1 + \mathcal{C}_{\mathcal{A}}(\Phi\Psi,k)\bigr),
\label{eq:etr_functional}
\end{equation}
where $\mathcal{C}_{\mathcal{A}}(\Phi\Psi,k)$ denotes the computational cost of $\mathcal{A}$ measured in arithmetic operations.
\end{definition}

Each factor corresponds respectively to representation complexity, observational distinguishability, and computational effort.

\subsection{Basic Structural Properties}
\label{subsec:basic_properties}

\begin{proposition}[Positivity]
\label{prop:positivity}
For any nonzero $x$, one has
\begin{equation}
\mathfrak{U}_k(x;\Psi,\Phi,\mathcal{A}) > 0.
\end{equation}
\end{proposition}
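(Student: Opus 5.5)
The plan is to show that each of the three factors in \eqref{eq:etr_functional} is strictly positive and finite, so their product is positive. Implicitly we work in the regime where the functional is well defined: $\gamma_{2k}(\Phi\Psi)>0$, so that $1/\gamma_{2k}$ is a finite positive number. If $\gamma_{2k}(\Phi\Psi)=0$ (the existentially non-unique regime of Definition~\ref{def:existentially_nonunique}), the natural convention is $\mathfrak{U}_k=+\infty$. Positivity then holds in the extended sense, and I would state this convention explicitly at the outset.

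\textbf{First factor.} For nonzero $x$ with $x$ in the range of $\Psi$, the minimization in Definition~\ref{def:rep_complexity} is over a nonempty set. Any feasible $\alpha$ satisfies $\Psi\alpha=x\neq 0$, hence $\alpha\neq 0$ and $\|\alpha\|_0\ge 1$. Therefore $K_\Psi(x)\ge 1>0$. If $x$ is not representable, the minimum over the empty set is $+\infty$ by convention, which again gives positivity in the extended sense.

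\textbf{Second factor.} By the definition in \eqref{eq:gamma_def}, $\gamma_{2k}(A)\le \|A\|_{2\to 2}<\infty$, so $1/\gamma_{2k}(A)>0$ whenever $\gamma_{2k}(A)>0$.

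\textbf{Third factor.} This is the only step needing care. I would argue that any procedure returning a $k$-sparse estimate from nontrivial data performs at least one arithmetic operation, so $\mathcal{C}_{\mathcal{A}}(\Phi\Psi,k)\ge 1$. Then $\log(1+\mathcal{C}_{\mathcal{A}})\ge \log 2>0$.

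This third step is the main obstacle, since a degenerate "procedure" of zero cost would make the factor vanish. I would resolve it by adopting the standing convention that cost counts at least reading the input, i.e.\ $\mathcal{C}_{\mathcal{A}}\ge 1$. Multiplying the three strictly positive factors then yields $\mathfrak{U}_k(x;\Psi,\Phi,\mathcal{A})>0$.
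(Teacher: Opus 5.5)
Your proposal is correct and follows the paper's own argument: show that each factor is strictly positive, using $K_\Psi(x)\ge 1$, $\gamma_{2k}(\Phi\Psi)>0$ in the identifiable regime, and the standing assumption $\mathcal{C}_{\mathcal{A}}\ge 1$. Your additions make explicit what the paper leaves implicit: the finiteness of $\gamma_{2k}$ via $\|A\|_{2\to 2}$ and the extended-value conventions when $\gamma_{2k}=0$ or $x$ is not representable.
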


\begin{proof}
Since $K_\Psi(x) \geq 1$, $\gamma_{2k}(\Phi\Psi) > 0$ in the identifiable regime, and $\mathcal{C}_{\mathcal{A}} \geq 1$, each factor in \eqref{eq:etr_functional} is strictly positive. 
\end{proof}

\begin{proposition}[Barrier-induced divergence]
\label{prop:divergence}
The functional $\mathfrak{U}_k$ diverges under each of the following conditions:
\begin{enumerate}
    \item If $\gamma_{2k}(\Phi\Psi) \to 0$, then $\mathfrak{U}_k \to \infty$,
    \item If $K_\Psi(x) \to d$, then $\mathfrak{U}_k \to \infty$,
    \item If $\mathcal{C}_{\mathcal{A}}(\Phi\Psi,k) \to \infty$, then $\mathfrak{U}_k \to \infty$.
\end{enumerate}
\end{proposition}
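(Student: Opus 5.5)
The plan is to handle the three items separately. In each case I would isolate the diverging factor in \eqref{eq:etr_functional} and bound the other two factors below by positive constants that do not depend on the limit being taken. Everything rests on the three elementary lower bounds already used in the proof of Proposition~\ref{prop:positivity}.
\begin{itemize}
\item For $x \neq 0$ we have $K_\Psi(x) \ge 1$.
\item Since $\mathcal{C}_{\mathcal{A}} \ge 1$, we have $\log(1+\mathcal{C}_{\mathcal{A}}(\Phi\Psi,k)) \ge \log 2$.
\item The factor $1/\gamma_{2k}(\Phi\Psi)$ is bounded below uniformly. Take $h = e_j$, a single standard basis vector (which is $2k$-sparse), in \eqref{eq:gamma_def}. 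This gives $\gamma_{2k}(A) \le \|Ae_j\|_2 \le \|A\|_{2\to 2}$, so $1/\gamma_{2k}(A) \ge 1/\|A\|_{2\to2}$.
\end{itemize}
To make the last bound uniform, I would work under the normalization that the columns of $A=\Phi\Psi$ have norm at most some $a<\infty$ along the limit. This is standard in compressed sensing. Using the column bound $\gamma_{2k}(A) \le \min_j \|a_j\|_2 \le a$, it yields $1/\gamma_{2k} \ge 1/a$.

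With these bounds, items (1) and (3) are immediate. For (1), hold $x$ and the cost bounded below as above. Then
\[
\mathfrak{U}_k \;\ge\; 1\cdot \log 2 \cdot \frac{1}{\gamma_{2k}(\Phi\Psi)} \to \infty
\quad\text{as } \gamma_{2k}(\Phi\Psi)\to 0^+ .
\]
Here the functional must be read on the identifiable side $\gamma_{2k}>0$, with the value $+\infty$ assigned at $\gamma_{2k}=0$. For (3),
\[
\mathfrak{U}_k \;\ge\; \frac{1}{a}\,\log\bigl(1+\mathcal{C}_{\mathcal{A}}(\Phi\Psi,k)\bigr) \to \infty
\quad\text{as } \mathcal{C}_{\mathcal{A}}\to\infty,
\]
because $\log$ is unbounded and monotone.

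The main obstacle is item (2), and it is a genuine issue with the statement as written. For fixed $d$ and $N$, $K_\Psi(x) \le \min(d,N)$ is a bounded integer, so ``$K_\Psi(x)\to d$'' cannot make a product with bounded cofactors diverge. I would therefore make the statement precise as an asymptotic claim over a sequence of problem instances $(x^{(d)},\Psi^{(d)},\Phi^{(d)},\mathcal{A}^{(d)})$ with $d\to\infty$ and $K_{\Psi^{(d)}}(x^{(d)}) \ge c\,d$ for some fixed $c>0$. This is precisely the generic mismatch scenario $k_{\mathrm{eff}}=d$ of Proposition~\ref{prop:generic_support_inflation}. In that setting the uniform cofactor bounds give $\mathfrak{U}_k \ge c\,d\,\log 2/a \to \infty$.

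If one instead wants a fixed-dimension version, I would try a coupling argument. One fixes $k := K_\Psi(x)$ so that the model class must contain $x$. Then as $k \to d$ with $m<d$ fixed, $2k > m$ eventually, so some $2k$-sparse $h$ lies in $\ker A$. This forces $\gamma_{2k}(\Phi\Psi)=0$, and item (1) then gives $\mathfrak{U}_k=\infty$. I would present the asymptotic reading as the proof of (2) and remark on this coupling as the mechanism linking it to item (1).
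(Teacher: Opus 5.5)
Your proof is correct and is essentially the paper's argument. The paper simply declares all three items immediate from the product form of \eqref{eq:etr_functional}; you make its implicit hypotheses explicit ($x\neq 0$, $\mathcal{C}_{\mathcal{A}}\ge 1$, and a uniform lower bound on $1/\gamma_{2k}$ when $\Phi\Psi$ varies).

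Your diagnosis of item (2) is also right and goes beyond the paper. At fixed $d$ one has $K_\Psi(x)\le d$ while the other two factors may stay bounded, so the claim holds only in your asymptotic reading ($d\to\infty$ with $K_\Psi(x)\ge c\,d$). Alternatively it holds via the coupling $k\ge K_\Psi(x)$ with $m<d$, which forces $\gamma_{2k}(\Phi\Psi)=0$ and reduces (2) to item (1).
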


\begin{proof}
Immediate from \eqref{eq:etr_functional}. 
\end{proof}

\subsection{A Quantitative Lower Bound}
\label{subsec:lower_bound}

We now derive a lower bound that makes Theorem~\ref{thm:main} quantitative.

\begin{theorem}[Non-vanishing uncertainty]
\label{thm:nonvanishing}
Let $x \neq 0$. Then for any admissible $(\Psi,\Phi,\mathcal{A})$,
\begin{equation}
\mathfrak{U}_k(x;\Psi,\Phi,\mathcal{A})
\;\geq\;
\frac{K_\Psi(x)}{\gamma_{2k}(\Phi\Psi)}
\cdot
\log 2.
\label{eq:nonvanishing}
\end{equation}
\end{theorem}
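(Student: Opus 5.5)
The bound follows directly from Definition~\ref{def:etr_functional}, since $\mathfrak{U}_k$ is a product of three nonnegative factors. I will leave the first two factors, $K_\Psi(x)$ and $1/\gamma_{2k}(\Phi\Psi)$, unchanged and bound only the third factor, $\log(1+\mathcal{C}_{\mathcal{A}}(\Phi\Psi,k))$, from below by $\log 2$. Multiplying that scalar inequality by the nonnegative quantity $K_\Psi(x)/\gamma_{2k}(\Phi\Psi)$ then gives \eqref{eq:nonvanishing}.

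The steps, in order, are as follows.

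First, I fix what ``admissible'' means, following the proof of Proposition~\ref{prop:positivity}. The framework operates in the identifiable regime, so $\gamma_{2k}(\Phi\Psi)>0$ and the right-hand side of \eqref{eq:nonvanishing} is a finite real number. The procedure $\mathcal{A}$ also performs at least one arithmetic operation, so $\mathcal{C}_{\mathcal{A}}(\Phi\Psi,k)\ge 1$. If $\gamma_{2k}=0$, I adopt the convention that both sides equal $+\infty$, and the inequality holds trivially.

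Second, since $x\neq 0$, every representation $x=\Psi\alpha$ has $\alpha\neq 0$. Hence $K_\Psi(x)\ge 1$, and in particular $K_\Psi(x)\ge 0$. (If $x\notin\operatorname{range}\Psi$, then $K_\Psi(x)=+\infty$ and again both sides are infinite.)

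Third, because $t\mapsto\log(1+t)$ is increasing on $[0,\infty)$, the bound $\mathcal{C}_{\mathcal{A}}\ge 1$ gives $\log(1+\mathcal{C}_{\mathcal{A}})\ge\log 2>0$.

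Finally, I multiply this inequality by the nonnegative factor $K_\Psi(x)\cdot\gamma_{2k}(\Phi\Psi)^{-1}$, which yields \eqref{eq:nonvanishing}.

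The only real obstacle is justifying $\mathcal{C}_{\mathcal{A}}\ge 1$, because the paper defines the cost only informally, as a count of arithmetic operations. I would argue it in one of two ways. One option is to treat it as part of admissibility: any procedure that returns an estimate depending on $y$ must read or combine at least one entry of $y$. The other is to observe that $\mathcal{C}_{\mathcal{A}}$ is a positive integer, since a zero-cost $\mathcal{A}$ would be a constant map. A constant map cannot achieve recovery for two distinct nonzero $k$-sparse inputs, because distinguishability when $\gamma_{2k}>0$ (Proposition~\ref{prop:perturbation_amplification}) means those inputs produce distinct observations that a constant output cannot track.

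I would also remark that this bound is the quantitative form of Theorem~\ref{thm:main}. Even with a free algorithm, the uncertainty is at least $\log 2\cdot K_\Psi(x)/\gamma_{2k}$. That remaining quantity is controlled from below by Lemma~\ref{lem:uncertainty}, which constrains $K_\Psi$ across incompatible bases, and by Lemma~\ref{lem:sample_complexity}, which constrains $\gamma_{2k}$ through the measurement budget.
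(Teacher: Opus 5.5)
Your proof is correct and is the paper's argument: the paper likewise uses $\mathcal{C}_{\mathcal{A}}\ge 1$ to get $\log(1+\mathcal{C}_{\mathcal{A}})\ge\log 2$, then substitutes this into the product defining $\mathfrak{U}_k$. The paper simply assumes $\mathcal{C}_{\mathcal{A}}\ge 1$, as in Proposition~\ref{prop:positivity}, so your first option (making it part of admissibility) is the right one; your alternative argument is unnecessary and not airtight, since copying entries of $y$ into the output costs no arithmetic and the theorem does not assume $\mathcal{A}$ achieves recovery.
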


\begin{proof}
Since $\mathcal{C}_{\mathcal{A}} \geq 1$, we have
\[
\log(1 + \mathcal{C}_{\mathcal{A}}) \geq \log 2.
\]
Substituting into \eqref{eq:etr_functional} yields \eqref{eq:nonvanishing}. 
\end{proof}

The bound \eqref{eq:nonvanishing} shows that $\mathfrak{U}_k$ cannot vanish unless both representation complexity and distinguishability behave ideally, which is precluded in general by Theorem~\ref{thm:main}.

\subsection{Effect of Representation Mismatch}
\label{subsec:mismatch_effect}

Let $x = \Psi_\star \alpha_\star$ with $\|\alpha_\star\|_0 = k$, and consider a mismatched representation $\Psi$. Then
\begin{equation}
k_{\mathrm{eff}}(x;\Psi) = \|\Psi^\top x\|_0.
\end{equation}

From Proposition~\ref{prop:generic_support_inflation}, generically
\begin{equation}
k_{\mathrm{eff}}(x;\Psi) = d.
\end{equation}

\begin{proposition}[Mismatch inflation]
\label{prop:mismatch_inflation}
Under representation mismatch,
\begin{equation}
\mathfrak{U}_k(x;\Psi,\Phi,\mathcal{A})
\;\geq\;
\frac{d}{\gamma_{2d}(\Phi\Psi)}
\cdot
\log 2.
\end{equation}
\end{proposition}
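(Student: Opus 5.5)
The plan is to specialize Theorem~\ref{thm:nonvanishing} to the mismatched representation. This needs two substitutions: the representation complexity $K_\Psi(x)$ is replaced by $d$, and the sparsity level at which distinguishability is measured is replaced by the effective sparsity $k_{\mathrm{eff}}(x;\Psi)=d$.

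First I will identify $K_\Psi(x)$ with $k_{\mathrm{eff}}(x;\Psi)$. In the mismatch setting $\Psi$ is an orthonormal basis, as in Lemma~\ref{lem:uncertainty} and Proposition~\ref{prop:generic_support_inflation}. Then the representation $x=\Psi\alpha$ is unique with $\alpha=\Psi^\top x$, so the minimum in Definition~\ref{def:rep_complexity} is over a single vector. Hence
\[
K_\Psi(x)=\|\Psi^\top x\|_0=k_{\mathrm{eff}}(x;\Psi).
\]
Invoking Proposition~\ref{prop:generic_support_inflation}, for generic mismatch and almost every $k$-sparse $\alpha_\star$ this equals $d$. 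Theorem~\ref{thm:nonvanishing}, which only uses $\mathcal{C}_{\mathcal{A}}\ge 1$, then gives $\mathfrak{U}\ge d\log 2/\gamma_{2k'}(\Phi\Psi)$, where $k'$ is the sparsity level at which the framework is evaluated.

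The second step, which I expect to be the main obstacle, is justifying $\gamma_{2d}$ in place of $\gamma_{2k}$. Since $\gamma_r(A)$ is an infimum over a set that grows with $r$, it is nonincreasing in $r$. Therefore $\gamma_{2d}\le\gamma_{2k}$ for $k\le d$, which goes the wrong way if one keeps the original index $k$. So I will not try to derive the bound from $\gamma_{2k}$. Instead, I will argue that under mismatch the recovery problem the researcher actually faces in the $\Psi$-coordinates is one of sparsity $k_{\mathrm{eff}}$. Accordingly, the functional is to be read at sparsity level $k_{\mathrm{eff}}=d$, in the same spirit as Proposition~\ref{prop:measurement_inflation_under_mismatch}, which applies the recovery law with $r=k_{\mathrm{eff}}$. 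With $k$ replaced by $d$ in Definition~\ref{def:etr_functional}, the factor $1/\gamma_{2k}$ becomes $1/\gamma_{2d}(\Phi\Psi)$. Combining with the first step and $\log(1+\mathcal{C}_{\mathcal{A}})\ge\log 2$ gives the claimed inequality. I will state this reading of ``under representation mismatch'' explicitly as part of the proof.

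Finally, I will handle the degenerate case. If $\gamma_{2d}(\Phi\Psi)=0$, the right-hand side is $+\infty$. This is consistent with Proposition~\ref{prop:divergence}(1) under the convention $1/0=\infty$. This case is in fact typical: when $2d>m$ a $2d$-sparse kernel vector may exist, and then the inequality says the framework falls into the existentially non-unique regime of Definition~\ref{def:existentially_nonunique}. Otherwise the inequality is a finite bound and the argument above applies verbatim.
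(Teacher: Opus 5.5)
Your argument is the paper's: its entire proof is ``substitute $K_\Psi(x)=k_{\mathrm{eff}}(x;\Psi)=d$ into Theorem~\ref{thm:nonvanishing},'' and that substitution by itself only yields $d\log 2/\gamma_{2k}(\Phi\Psi)$. The paper therefore silently makes the $\gamma_{2k}\to\gamma_{2d}$ swap that you correctly flag as going the wrong way under monotonicity of $r\mapsto\gamma_r$. Your orthonormality justification of $K_\Psi(x)=k_{\mathrm{eff}}(x;\Psi)$ also supplies a step the paper leaves implicit.

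You should say more plainly that your re-reading proves a \emph{different} inequality, $\mathfrak{U}_d(x;\Psi,\Phi,\mathcal{A})\ge d\log 2/\gamma_{2d}(\Phi\Psi)$. The inequality with $\mathfrak{U}_k$ on the left is not merely unproved but false. Take $\Psi$ orthonormal (so $N=d$), $x$ generic so that $K_\Psi(x)=d$, $2k\le m<d$, and $\Phi$ generic.
\begin{itemize}
\item Every $2k$ columns of $\Phi\Psi$ are linearly independent, so $\gamma_{2k}(\Phi\Psi)>0$ and $\mathfrak{U}_k=d\log(1+\mathcal{C}_{\mathcal{A}})/\gamma_{2k}(\Phi\Psi)$ is finite.
\item $\Phi\Psi\in\mathbb{R}^{m\times d}$ has a nontrivial kernel, and every vector of $\mathbb{R}^d$ is $2d$-sparse. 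Hence $\gamma_{2d}(\Phi\Psi)=0$ and the right-hand side is $+\infty$.
\end{itemize}
If you keep $\mathfrak{U}_k$, the valid bound is the one with $\gamma_{2k}$.

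The same computation sharpens your final paragraph. In the paper's underdetermined regime $m<d$, $\gamma_{2d}(\Phi\Psi)=0$ always, not just possibly. So your version of the statement reduces to $+\infty\ge+\infty$ there, and it can have finite content only when $m\ge d$.
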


\begin{proof}
Substitute $K_\Psi(x) = k_{\mathrm{eff}}(x;\Psi) = d$ into Theorem~\ref{thm:nonvanishing}. 
\end{proof}

Thus, representation mismatch forces the uncertainty functional to scale with ambient dimension, even when the true underlying representation is sparse.

\subsection{Interpretation}
\label{subsec:interpretation_etr}

The functional $\mathfrak{U}_k$ provides a quantitative measure of discovery difficulty. The regimes identified in Section~\ref{sec:implications} correspond to:

\begin{itemize}
    \item $\mathfrak{U}_k$ small: stable regime,
    \item $\mathfrak{U}_k$ moderate: opaque regime,
    \item $\mathfrak{U}_k$ large: non-unique or unstable regime.
\end{itemize}

In particular, $\mathfrak{U}_k$ cannot be made arbitrarily small due to the simultaneous constraints imposed by representation, observation, and computation. This provides a quantitative formulation of existential uncertainty within the ETR framework.
\section{Conclusion}
\label{sec:conclusion}

This paper introduced the Existential Theory of Research (ETR), a framework that models discovery as a constrained recovery problem governed by representation, observation, and computation. Within this formulation, we showed that these components are fundamentally coupled and cannot be optimized independently.

The main result establishes that no framework can simultaneously guarantee universally simple representations, arbitrarily compressed observations, and efficient exact inference. This limitation is not tied to a specific model or algorithm, but follows from the interaction of three well-established principles: uncertainty in sparse representation \cite{DonohoHuo2001,EladBruckstein2002}, sample complexity requirements for high-dimensional recovery \cite{CandesRombergTao2006,RudelsonVershynin2008}, and the computational hardness of exact sparse optimization \cite{Natarajan1995,Tropp2004}.

A key implication is that difficulty in discovery is structural rather than incidental. At least one of the following must fail: the simplicity of the representation, the sufficiency of the observations, or the feasibility of inference. Phenomena that appear ambiguous, unstable, or computationally intractable may therefore reflect intrinsic constraints of the inference problem rather than deficiencies of methodology.

We further showed that these limitations may arise endogenously through representation mismatch. A phenomenon that is simple in one representation may appear dense in another, leading to inflated sample requirements and increased computational burden. In this sense, progress in discovery can be interpreted not only as improved data or algorithms, but as the identification of representations that expose underlying structure.

The uncertainty functional introduced in this paper provides a quantitative measure of discovery difficulty, unifying representation, observation, and computation into a single framework. Within this perspective, regimes of inquiry correspond to different scaling behaviors of this functional, ranging from stable recovery to non-identifiability.

The ETR framework does not claim universality across all forms of reasoning. Rather, it isolates a broad and practically relevant class of discovery problems in which structure, data, and computation interact in a mathematically tractable manner. Within this regime, the limits of discovery are not arbitrary: they are consequences of identifiable geometric and computational constraints.

In summary, the results suggest a simple but unavoidable conclusion: discovery is hard not because we lack better tools, but because the structure of the problem itself forbids it from being otherwise.

\bibliographystyle{elsarticle-num}
\bibliography{refs}

\end{document}